\documentclass[sigconf,nonacm]{acmart} 
\AtBeginDocument{%
  \providecommand\BibTeX{{%
    \normalfont B\kern-0.5em{\scshape i\kern-0.25em b}\kern-0.8em\TeX}}}

\setcopyright{none}

\usepackage[T1]{fontenc}
\usepackage{tikz}
\usetikzlibrary{positioning, arrows.meta, calc, fit}
\usepackage{pgfplots}
\pgfplotsset{compat=1.18}
\usepackage{graphicx}
\usepackage[table]{xcolor}
\usepackage{adjustbox}
\usepackage{booktabs}
\usepackage{makecell}
\usepackage{graphicx}
\usepackage{booktabs}
\usepackage{makecell}
\usepackage{graphicx}
\usepackage{multirow}
\usepackage{pifont}

\makeatletter
\providecommand{\@copyrightspace}{}
\renewcommand{\@copyrightspace}{}
\makeatother

\usepackage{booktabs}   
\usepackage{tabularx}
\usepackage{array}
\usepackage{multirow}   
\usepackage{makecell}
\usepackage{graphicx}

\usepackage{amsmath}    
\usepackage{amsfonts}   
\usepackage{amsthm}     
\usepackage{nccmath}    
\usepackage{bm}         
\usepackage{bbm}        

\usepackage[ruled, vlined, linesnumbered]{algorithm2e}
\usepackage{algorithmic}

\usepackage{caption}
\usepackage{subcaption} 
\usepackage{tikz}
\usepackage{pgfplots}
\pgfplotsset{compat=1.18}

\definecolor{cBar}{HTML}{B8D8A8}
\definecolor{cOurs}{HTML}{1F4E79}
\definecolor{cRAG}{RGB}{88,102,230}
\definecolor{cAMEM}{RGB}{60,160,90}
\definecolor{cSteel}{RGB}{220,90,90}
\definecolor{cGrid}{RGB}{180,180,180}
\definecolor{cNaive}{HTML}{4C78A8}
\definecolor{cRandom}{HTML}{F2A65A}
\definecolor{cRule}{HTML}{59A14F}
\definecolor{cAMEM}{HTML}{9C6ADE}
\definecolor{cMemoRAG}{HTML}{8C8C8C}
\definecolor{cConMem}{HTML}{D62728}

\usepackage{tikz}
\usetikzlibrary{calc}

\usepackage{enumerate}
\usepackage{enumitem}   
\usepackage{balance}    
\usepackage{afterpage}
\usepackage[normalem]{ulem}
\useunder{\uline}{\ul}{} 
\newcolumntype{Y}[1]{%
  >{\hsize=#1\hsize\linewidth=\hsize
    \raggedright\arraybackslash}X%
}

\usepackage{hyperref}

\AtBeginDocument{%
  \providecommand\BibTeX{{%
    Bib\TeX}}}

\newtheorem{assumption}{Assumption}
\newtheorem{corollary}{Corollary}
\newtheorem{Defination}{Definition}
\newtheorem{example}{Example}

\newcommand\figref[1]{Fig.~\ref{#1}}

\newcommand\secref[1]{Sec.~\ref{#1}}

\newcommand{\fakeparagraph}[1]{\vspace{1mm}\noindent\textbf{#1.}}

\ifodd 1
\newcommand{\TODO}[1]{\textbf{\color{red}{TODO: #1} }}

\else
\newcommand{\TODO}[1]{}

\fi

\newcommand{\sysname}{\textsf{ConMem}\xspace}

\definecolor{fullrag}{HTML}{7E57C2}
\definecolor{randomc}{HTML}{E45756}
\definecolor{rulec}{HTML}{72B7B2}
\definecolor{steelmem}{HTML}{54A24B}

\makeatletter
\renewcommand{\@authorfont}{\Large}
\renewcommand{\@affiliationfont}{\footnotesize}
\makeatother

\begin{document}


\title{
    ConMem: Contribution-Aware Memory for Long-Horizon Manufacturing Inspection Logs
}

\author{Bingchen Liu}
\email{lbcraf2018@126.com}
\orcid{0000-0001-5041-8593}
\affiliation{
  \institution{Shandong University, School of Software}
  \city{Jinan}
  \state{Shandong}
  \country{China}
}

\author{Yuanyuan Fang}
\email{fyy@bu.edu}
\affiliation{
  \institution{Boston University, Metropolitan College}
  \city{Boston}
  \state{MA}
  \country{USA}
}

\author{Lei Liu}
\email{120242227112@ncepu.edu.cn}
\affiliation{
  \institution{North China Electric Power University, School of Control and Computer Engineering}
  \city{Beijing}
  \state{Beijing}
  \country{China}
}

\author{Guangyuan Dong}
\email{guangyuan@u.nus.edu}
\affiliation{
  \institution{National University of Singapore, Department of Statistics and Data Science}
  \city{Singapore}
  \state{Singapore}
  \country{Singapore}
}

\author{Xing Fu}
\email{fuxing@bu.edu}
\affiliation{
  \institution{Boston University, Metropolitan College}
  \city{Boston}
  \state{MA}
  \country{USA}
}

\author{Yuanyuan Gao}
\email{202635611@mail.sdu.edu.cn}
\affiliation{
  \institution{Shandong University, School of Software}
  \city{Jinan}
  \state{Shandong}
  \country{China}
}

\author{Shuyue Wei}
\email{weishuyue@sdu.edu.cn}
\affiliation{
  \institution{Joint SDU-NTU Centre for Artificial Intelligence Research (C-FAIR), Shandong University}
  \city{Jinan}
  \state{Shandong}
  \country{China}}

\author{Xin Li}
\email{lx@sdu.edu.cn}
\affiliation{%
  \institution{Shandong University, School of Software}
  \city{Jinan}
  \state{Shandong}
  \country{China}}

\author{Xiangtian Meng}
\email{mxt99@163.com}
\affiliation{%
  \institution{Rizhao Steel Holding Group Co., Ltd.}
  \city{Rizhao}
  \state{Shandong}
  \country{China}
}

\renewcommand{\shortauthors}{Liu et al.}

\makeatletter
\renewcommand\footnotetextcopyrightpermission[1]{}
\def\ACM@checkcopyright{}
\makeatother

\begin{abstract}
    Long-horizon steel-equipment inspection requires reasoning over heterogeneous records accumulated across repeated inspection cycles.
    Existing retrieval-augmented generation systems treat historical logs as a static corpus and retain records without estimating their diagnostic value, failing to report early risk. 
    To this end, we propose ConMem, a contribution-aware memory framework for LLM-assisted equipment inspection, supporting a human-in-the-loop early-risk screening system.
    Specifically, our ConMem first segments inspection logs into functional evidence units, then estimates each memory unit’s contribution to downstream diagnosis through a Shapley-style estimation, and finally retains high-value evidence under a constrained memory budget.
    In experiments, we evaluate ConMem on real-world dataset and ConMem achieves 76.0\% QA accuracy, exceeding the strongest directly comparable baseline.
    Relative to the naive 8K-context LLM baselines, it reduces the average number of input tokens by 88.2\% and response time by 86.6\%.
    Ablation studies also show that the functional-role-aware segmentation and contribution-based valuation are helping prioritize weak degradation signals for targeted field inspection.
    Practical deployments further confirm that ConMem retains the weak early signal across three inspection cycles, providing an early-stage seal-wear alert targeted for on-site inspectors.
    Source codes are available at \url{https://anonymous.4open.science/r/ConMem-F1B0/}.

\end{abstract}



\keywords{Large Language Models, Steel Equipment Inspection, Shapley Value}


\maketitle

\section{INTRODUCTION}
    Modern manufacturing facilities rely on periodic routine inspections of core production assets, such as motors, pumps, bearings, and conveyors, to detect operational risks early and prevent economically expensive unplanned downtime~\cite{bib1005}.
    Thus, equipment inspection plays a critical role in modern manufacturing, such as in the steel industry.
    However, traditional manual inspection methods are usually labor-intensive, subjective, and often fail to capture early signs of equipment degradation.
    With the rapid advancement of large language models (LLMs), there is growing interest in performing equipment inspection with the assistance of LLM-based agents, which enable retrieval and reasoning over historical inspection records \cite{bib1001, bib1002}.
    We show a typical example of LLM-assisted equipment inspection for human-in-the-loop early-risk screening system in the steel manufacturing industry in \figref{fig:workflow}.
\begin{figure}[!htb]
\vspace{-1em}
\centering 
\includegraphics[width=1\linewidth]{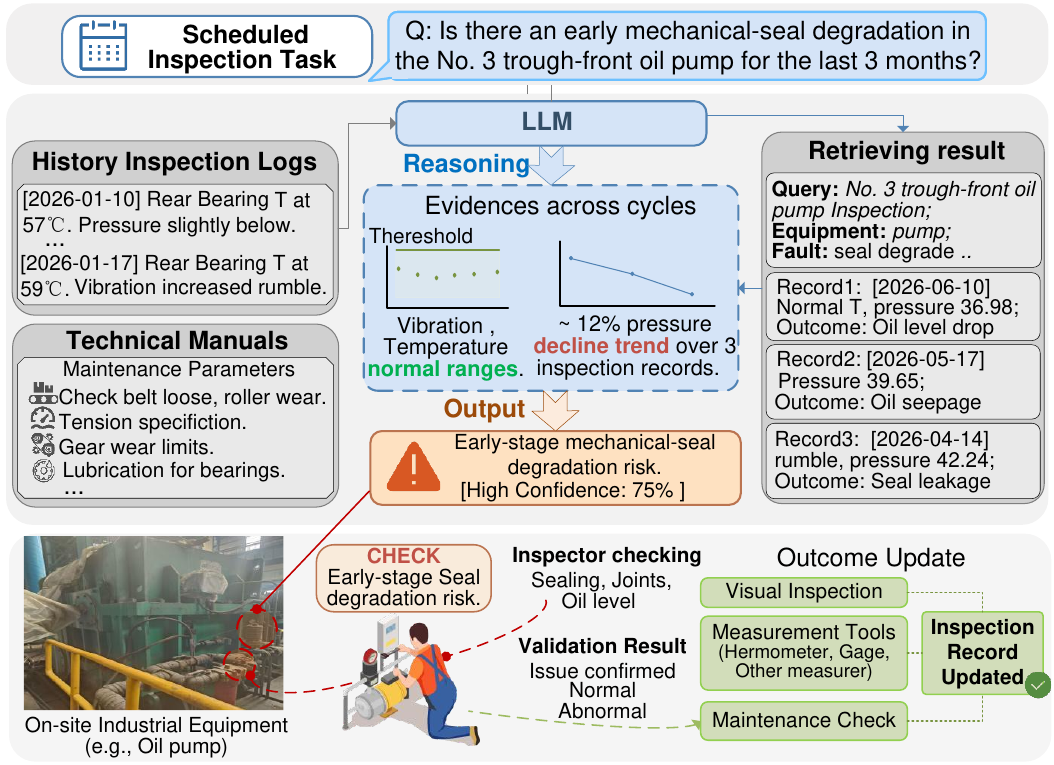}
\vspace{-1em}  
\caption{A typical equipment inspection workflow.}
\label{fig:workflow}
\vspace{-1.5em}
\end{figure}

\begin{example}[Real-world Equipment Inspection]
        A typical industrial inspection workflow consists of three stages. 
    \ding{172} The system receives a query of a scheduled inspection task, from which the LLM identifies the target equipment and inspection objectives, and then retrieves the corresponding technical manuals together with historical inspection records. 
    \ding{173}The system then analyzes heterogeneous inspection evidence, including equipment status, safety anomalies, maintenance action, and technical manuals, to evaluate equipment health. 
    For instance, the LLM retrieves historical inspection records from memory and identifies a pressure drop over the previous three inspection cycles. 
    By combining this degradation trend with similar historical cases, it infers a potential early-stage seal wear issue and generates an alert with a confidence score. 
    \ding{174}Finally, the inspector targets the belt and performs a sealing test to validate the anomaly and records the inspection outcome, which can be stored for future inspections. 
    This workflow enables the system to recognize early weak degradation signals as long-term records continuously accumulate.
\end{example}

    Existing LLM-assisted equipment inspection systems mainly adopt two paradigms: \textit{(i)} \textit{standalone-LLM reasoning}~\cite{ref5,ref29,ref30}, which directly analyzes current inspection records, and \textit{(ii) retrieval-augmented reasoning}~\cite{ref34,bib1001, bib1002, bib1010, bib1012}, which incorporates external knowledge through retrieval.
    As shown in Table~\ref{tab:pilot}, there's an inherent incompatibility between early warning accuracy and computational efficiency. Existing systems usually fail to effectively model equipment health evolution or provide reliable early warnings for the following reasons. 
    First, both paradigms implicitly formulate each inspection cycle as an independent task, thereby fail to exploit cross-cycle evidence accumulated during long-term equipment inspection.
    Second, when reasoning over heterogeneous inspection records spanning multiple components of large-scale equipment, they lack the structured historical context associating critical abnormalities with relevant evidence, leading to inaccurate fine-grained fault diagnosis. 
    Furthermore, industrial equipment continuously accumulates thousands of inspection records over its service life, in which long-horizon degradation rather than isolated observations characterizes equipment health. 
\begin{table}[t]
\centering
\small
\caption{Comparison of different inspection paradigms. 
}
\label{tab:pilot}
\vspace{-1.5em}  
\begin{tabular}{lccc}
\toprule
\textbf{Method} & \textbf{Acc.(\%) $\uparrow$} & \textbf{Token $\downarrow$} & \textbf{Time(ms) $\downarrow$} \\
\midrule
Standalone LLM~\cite{ref30} & 46.0 & 10086.4 & 18760.0 \\
Full RAG~\cite{ref34}        & 55.0 & \textbf{4496.2} & \textbf{2934.4} \\
A-MEM~\cite{ref35}          & \textbf{62.0} & 3702.1 & 8910.8 \\
\bottomrule
\end{tabular}
\vspace{-1em}  
\end{table}
    We argue that it is crucial for equipment inspection to selectively reason over long-horizon historical evidence accumulated across inspection cycles, which motivates us to take a memory-enhanced long-horizon agentic equipment inspection. 
Agentic memory equips LLM with an external memory that persists across queries and can be continuously updated and retrieved~\cite{ref11,bib1017,bib1018}, fitting equipment inspection where long-term degradation must be tracked across cycles.
    However, it faces two key challenges to implement memory-enhanced agentic inspection as follows.
    \begin{itemize}[leftmargin=1.5em, itemsep=1pt]
        \item \textit{Challenge 1}:
        \textit{
            Existing memory systems~\cite{bib1013,ref11} rely on textual continuity to segment conversation history. 
            However, inspection texts in equipment inspection are typically formed with heterogeneous functional-roles units (e.g., equipment status, safety anomalies, and maintenance actions), where generic segmentation fails to distinguish one from another, leading to degraded reasoning results over such mixed evidence.
        }
        
        \item \textit{Challenge 2}: \textit{Existing retrieval mechanisms~\cite{bib1017,bib1018} use task-agnostic relevance metrics (e.g., semantic similarity) rather than inspection-oriented contribution metrics to rank memory units, fail to prioritize critical anomalies like belt peeling over trivial records such as normal temperature.
        Consequently, it will cause unnecessary context expansion and limited reasoning gains.}


        
    \end{itemize}

To overcome these challenges, we propose a contribution-aware memory framework for LLM-assisted equipment inspection, \sysname{}, supporting a human-in-the-loop early-risk screening system.
Our framework follows a three-stage workflow tailored to the inspection domain. 
{First, we design a functional-role-aware segmentation module, which parses raw inspection logs into functional evidence units, thereby clearly separating equipment status, safety anomalies, and maintenance action.}
Second, we introduce a Shapley-based valuation module to quantify each memory unit's contribution to future reasoning, ensuring that high valued evidences are selected.
Finally, we propose a priority-based storage and retrieval module maintains a max-heap of memory units ranked by contribution, so long-term evolving components like belt wear are prioritized over static ones like lighting status.

The main contributions of this paper are summarized as follows:
\begin{itemize}[leftmargin=1.5em, itemsep=1pt]
    \item We propose a contribution-aware memory framework for LLM-assisted equipment inspection, addressing the challenges of long-term degradation tracking and early warnings.
    \item We introduce a Shapley-based valuation mechanism to quantify the contributions of memory units to diagnosis results, enabling token reduction while prioritizing critical anomalies.
    \item We conduct extensive experiments on a real-world steel equipment inspection dataset, demonstrating that our method significantly outperforms representative full-RAG and generic agentic-memory baselines, improving early warning accuracy by up to 4\% while reducing token overhead by up to 72.9\%.
\end{itemize}


The remainder of this paper is organized as follows.
We present the related work corresponding to this paper in Section 2. 
We formulate the equipment inspection task and provides the necessary notation in Section 3. 
Detailed specifics of our ConMem system are introduced in Section 4 and the experimental results are reported in Section 5.
Finally, we conclude this work in Section 6.

\section{RELATED WORK} \label{sec:related}

\noindent\textbf{LLM for Industrial Equipment Inspection.}
{Recent years have witnessed growing interest in applying large language models (LLMs) to industrial equipment inspection.
Researchers have explored various LLM-based approaches to assist inspectors in analyzing routine inspection logs, identifying surface anomalies such as belt peeling and roller wear, and generating maintenance recommendations.
A common paradigm is retrieval-augmented generation (RAG), where an LLM queries historical inspection logs to retrieve relevant records before generating responses.
RAG-based systems have been developed to retrieve past anomaly records and maintenance actions for steel rolling equipment and conveyor systems~\cite{bib1001, bib1002, bib1010, bib1012}.
Several studies have applied semantic segmentation and object detection to identify equipment defects such as cracks and surface wear in steel production lines~\cite{bib1003, bib1004, bib1007, bib1008, bib1009}.
Besides, in the steel manufacturing domain, researchers have also focused on predictive maintenance and degradation tracking for critical components like blast furnace shells and continuous casting rollers~\cite{bib1005, bib1006, bib1011}.
However, these RAG-based approaches retrieve the full history or fixed-length chunks for each query, leading to high token overhead and slow retrieval speed~\cite{bib1013, bib1014}. Moreover, they treat each query independently and cannot capture long-term degradation trends across cycles.
In our work, we propose an agentic memory tailored for industrial equipment inspection that persistently store and update the inspection logs across cycles.}

\noindent\textbf{LLM Memory Systems.}
{External memory management in LLM agents has been explored across multiple domains, yet each line of work carries assumptions that do not align with industrial equipment inspection. Early studies focus on conversational memory, storing user-LLM dialogue histories as textual records. SCM~\cite{bib1013} and Generative Agents~\cite{ref11} maintain social or dialogue contexts through hierarchical memory and memory streams, respectively. MemGPT~\cite{ref10} and MemoChat~\cite{ref4} target extended conversations and topic-level summarization. Other works focus on task-oriented memory, such as MAC~\cite{bib1017} compressing document information and AWM~\cite{bib1018} extracting reusable workflows from agent trajectories. More recently, Mem0~\cite{ref38} and Zep~\cite{ref13} provide lightweight or persistent memory layers for general LLM applications. While effective in general-purpose chat or tool-use scenarios, these methods assume that memory units are either equally important or selected heuristically. However, equipment inspection logs contain heterogeneous entries, whose long-term value varies greatly.
Moreover, inspection queries span hundreds of cycles, requiring the system to track slow degradation trends while filtering out routine records.
To address these issues, we focus on the equipment inspection scenario, explicitly modeling inspection record segments and evaluating memory contribution for effective early warning.

\section{PRELIMINARIES}
\subsection{Problem definition}

\begin{definition}[Inspection Records]

Formally, an inspection record is represented as
$I = (e, c, \textit{R}, t, o),$
where $e$ denotes the equipment ID, $c$ is the inspected component (e.g., belt, fire extinguisher, lighting), $t$ is the inspection timestamp, $o$ is the human check result, and $\mathcal{R}=\{r_1,\ldots,r_n\}$ is a collection of inspection entries. Each entry is represented as $r=(\tau,w), \tau \in \{\textit{Parameter Status},\allowbreak\textit{Safety Anomalies}, \allowbreak\textit{Maintenance Advice}\}$, denotes the information category, and $w$ denotes the corresponding inspection content.

\end{definition}
A steel inspection record corresponding to a single inspection task performed on a specific equipment component. It aggregates multiple categories of inspection information, including equipment parameter status, safety anomaly analysis, and maintenance action. 

\begin{definition}[Memory Unit]

A memory unit corresponds to a semi-structured inspection record generated from a single inspection task. It is represented as
$m=(e,c,C,t),$
where $e$ denotes the equipment ID, $c$ is the inspected component, $t$ is the inspection timestamp, $C$ denotes the inspection content.

\end{definition}
\begin{definition}[Evidence Unit]
The inspection content is semantically organized into a set of functional-role-aware evidence units
$C=\{v^{S},v^{A},v^{M}\},$
where $v^{S}$, $v^{A}$, $v^{M}$ respectively correspond to the inspection segment describing the equipment status, safety anomalies, and maintenance actions. 

\end{definition}

\begin{definition}[Human-in-the-Loop Early-Warning Inspection]
\label{def:early_warning}

A memory-enhanced steel inspection problem is defined as a tuple
$(\mathcal{D}, q_t, \mathcal{M}_t, \mathrm{LLM})$,
where $q_t$ denotes a routine inspection query issued at time $t$, and
$\mathcal{D}_{t}^{W}
=
\{d_i \in \mathcal{D} \mid t-W \leq t_i \leq t\}$
denotes the inspection records available within a look-back window
$W$.
The system maintains a persistent memory store $\mathcal{M}_t$,
which is updated across inspection cycles using only records available
up to time $t$.
For each query $q_t$, it retrieves a relevant subset
$\mathcal{M}_{q_t} \subseteq \mathcal{M}_t$ combining with the
current inspection context $\mathcal{X}_{q_t}$ to generate
$a_t = (\hat{y}_t, \mathcal{E}_t, r_t) = 
\mathrm{LLM}\left(q_t,
\mathcal{M}_{q_t},
\mathcal{X}_{q_t}
\right)$,where $\hat{y}_t$ is an early-risk indicator,
$\mathcal{E}_t$ contains the supporting historical evidence, and
$r_t$ provides the corresponding risk explanation and recommended
inspection target.

Let $g_j=(e_j,c_j,f_j,t_j^c)$ denote a subsequently confirmed
anomaly equipment $e_j$, component $c_j$, fault type
$f_j$, and confirmation time $t_j^c$. We write $g_j\sim q_t$ when
$g_j$ and $q_t$ refer to the same equipment and component, and their
fault types satisfy the matching criterion
$f_j\simeq f_{q_t}$. Given a follow-up horizon $h_d$, the
early-warning label is defined as
$y_t=\mathbb{I}[\exists\,g_j\sim q_t:
t_j^c-t\in(0,h_d)]$.
Importantly, the confirmed result becomes part of the historical
evidence available only to subsequent routine inspection queries
issued after $t_j^c$.

The objective is to maximize the recall of subsequently confirmed anomalies while maintaining an acceptable false-alert rate and providing sufficient lead time for targeted inspection.
The generated warning is therefore a decision-support signal rather than an autonomous fault diagnosis: professional inspectors remain responsible for confirming anomalies and determining corresponding maintenance actions.
\end{definition}


\begin{figure*}[!htb]
\vspace{-1em}
\centering
\includegraphics[width=\textwidth]{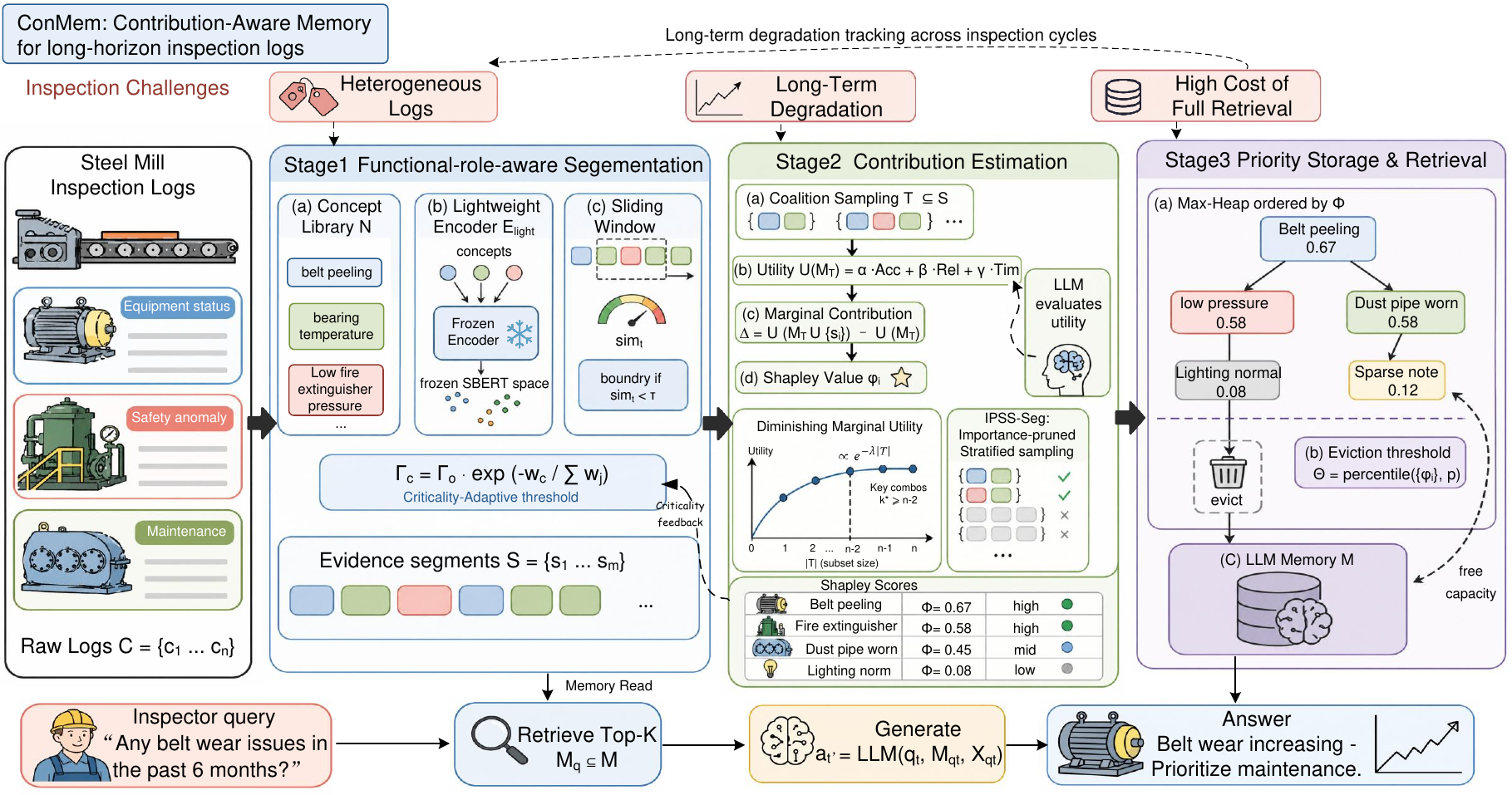}
\caption{Overall framework of ConMem.}
\label{fig:framework}
\vspace{-1.5em}
\end{figure*}

\section{METHOD}
\subsection{ConMem Overview}

This section details the technical implementation of the three modules introduced in Section 4, with all techniques specifically adapted for steel equipment inspection scenarios.

To address the limitations of existing historical records segmentation and memory storage mechanisms for long-horizon equipment inspection, which often lead to inefficient retrieval and reduced early-warning accuracy, we propose a \underline{con}tribution-aware \underline{mem}ory framework for LLM-assisted equipment inspection, supporting a human-in-the-loop early-risk screening system. As shown in \figref{fig:framework}, ConMem consists of three key components.
(1) The functional-role-aware segmentation module follows a standard NLP pipeline to decompose each inspection record into several functional roles, enabling fine-grained valuation of evidence units (in \secref{sec:segmentation}). 
(2) The contribution estimation module estimates the contribution of each memory unit to anomaly diagnosis through a utility function. (in \secref{sec:Contribution}).
(3) The priority-based efficient storage module supports efficient long-horizon evidence retrieval under constrained memory capacity through a priority-based storage strategy that dynamically retains high-value memory units while evicting low-value ones during memory updates (in \secref{sec:Storage}).

\subsection{Functional-Role-Aware Segmentation
}
\label{sec:segmentation}
For equipment inspection, logs are inherently heterogeneous. A single record may contain equipment status entries, safety anomalies, and maintenance records. A query "are there any belt wear issues in the past six months?" may get more evidence information from the equipment status entries like "Vibration slightly exceeds 4.5 mm/s" than from maintenance records entries like "Replace all bearings". 
Generic segmentation methods~\cite{bib1013,ref11} cannot distinguish different roles, thus, it demands a segmentation approach that respects the functional role within inspection records.

Our observation is that steel inspection logs naturally contain functional-role shifts. For instance, a log sequence often transitions from equipment status reporting to safety anomaly detection, then to maintenance action recording. 
Based on this observation, we design a functional-role-aware segmentation module that parses raw inspection logs into functional-role-aware evidence units.

Specifically, given available historical inspection logs $\mathcal{C} = \{c_1, \ldots, c_n\}$, we first extract noun phrases $\mathcal{N}$ using standard NLP pipelines.
A lightweight encoder $E_{\text{light}}$ is trained to map these concepts to a frozen sentence encoder space (e.g., SBERT) by minimizing:
\[
\min_{\theta} \sum_{n \in \mathcal{N}} \| E_{\text{light}}(n) - E_{\text{frozen}}(n) \|^2. \tag{1}
\]
Then after $E_{\text{light}}$ is prepared, the segmentation is processed on dynamic sliding windows. For sliding window position $s_t$, concept similarity is computed as:
\[
\text{sim}_{st} = \frac{1}{|\mathcal{N}_{st}|} \sum_{n \in \mathcal{N}_{st}} \max_{c \in \mathcal{C}_{\text{prev}}} \cos(E_{\text{light}}(n), E_{\text{light}}(c)). \tag{2}
\]
A segment boundary is detected when $\text{sim}_{st} < \tau$, indicating a role shift.
For example, transitioning from equipment status to safety anomaly.
Besides, to further optimize segmentation quality for steel inspection, we introduce a domain-adaptive threshold adjustment mechanism. Finally, given the varying importance of different equipment components, the threshold $\tau$ is dynamically calibrated based on component criticality $\medop{\tau_c = \tau_0 \cdot \exp(-{w_c}/{\sum_{j} w_j})}$,
where $w_c$ denotes the criticality weight of component $c$ (e.g., conveyor belts receive higher weight than lighting fixtures). Thus, this ensures that segments containing critical components like belt drives are preserved with higher fidelity.

Consequently, each evidence unit preserves complete evidence within a single functional role, providing reliable memory entries for fine-grained valuation and efficient long-horizon retrieval.

\subsection{Contribution Estimation for Inspection Logs}
\label{sec:Contribution}
Not all evidence units are equally valuable for future queries, each memory unit obtained from \secref{sec:segmentation} lacks contribution estimation. A routine temperature reading from three months ago contributes little to answering ``Has the belt shown signs of wear?'' whereas a single occurrence of ``fire extinguisher underpressure'' may be critical.
In equipment inspection, the long-term value of a record depends on two factors: (i) its relevance to equipment health (anomalies matter more than routine status), and (ii) its timeliness (older records gradually lose value).
However, existing agentic-memory methods lack a principled way to quantify such contributions, leading to critical anomalies being evicted in favor of trivial records.

Our key observation, validated on steel inspection data, is that marginal utility diminishes as more context is provided. After retrieving the top 3 most valuable segments, adding more inspection records yields negligible accuracy improvement. This diminishing returns property motivates a valuation mechanism that can fairly rank segments by their contribution.

Thus, we adopt the marginal contribution underlying the Shapley value from cooperative game theory~\cite{ref31,ref32,ref33}.
Specifically, for $m$ segments $\mathcal{S} = \{s_1, \ldots, s_m\}$ extracted from inspection logs, the marginal contribution-based Shapley value for segment $s_i$ is:
\[
\phi_i = \sum_{T \subseteq \mathcal{S} \setminus \{s_i\}} \frac{U(M_{T \cup \{s_i\}}) - U(M_T)}{m \cdot \binom{m-1}{|T|}}, \tag{4}
\]
where $U(\cdot)$ measures LLM response quality for representative queries, such as ``Has the belt shown degradation over the past three months?''. However, direct computation requires $O(2^m)$ evaluations. Thus, to make Shapley computation tractable for inspection scenarios, first we define the utility function $U(\cdot)$. Importantly, $U$ is estimated exclusively from chronologically earlier training data and remains fixed throughout test-time evaluation. Specifically, for each training query $q_t$ issued at time $t$, its input context and every evaluated memory coalition $M_{qt}$ contain only inspection records whose timestamps fall within the look-back window [t-W,t]. The query is constructed from the scheduled inspection task and metadata available at time $t$. The label $y_t$ is determined retrospectively from human-confirmed outcomes occurring within the follow-up window $(t,t+h_d]$. For a memory coalition $M_{qt}$ we compute
\[
U(M_{qt})=\alpha\,\mathrm{Acc}_{\mathcal Q_{\mathrm{train}}}(M_{qt})
+\beta\,\mathrm{Rel}_{\mathcal Q_{\mathrm{train}}}(M_{qt})
+\gamma\,\mathrm{Tim}(M_{qt}), \tag{5}
\]
where $\text{Acc}(M_{qt})$ measures factual accuracy against ground-truth inspection records, $\text{Rel}(M_{qt})$ captures relevance to the current equipment component, and $\text{Tim}(M_{qt})$ penalizes outdated information (e.g., records older than 12 months receive lower weight).
Then we formalize the diminishing returns property for inspection segments,
\[
\frac{\partial U(M_{T \cup \{s_i\}})}{\partial |T|} \propto e^{-\lambda |T|}, \quad \lambda > 0, \tag{6}
\]
where $\lambda$ is the decay rate empirically estimated from inspection data. 
Finally, based on this property, we use importance-pruned stratified sampling to prune low-impact combinations while preserving accuracy, reducing the number of utility evaluations to only a small subset of the original $O(2^m)$ coalition space.

\begin{table}[t]
\centering
\caption{Complexity analysis of the proposed pruned strategy. K denotes the number of elements in a coalition.}
\label{tab:complexity_analysis}
\small
\vspace{-1.2em}
\begin{tabular}{lccc}
\toprule
Strategy(n=8) & Evaluation-times $\downarrow$ & Spearman $\uparrow$ & Top-3 Overlap$\uparrow$ \\
\midrule
Full  & 256 & 1.000 & 1.000 \\
$K \geq n-3$  & 93 & 1.000 & 1.000\\
$K \geq n-2$  & \textbf{37} & \textbf{1.000} & \textbf{1.000} \\
$K \geq n-1$ & 9 & 0.997 & 0.959 \\
\bottomrule
\end{tabular}
\vspace{-1.2em}
\end{table}

\noindent\textbf{Complexity Analysis.} To validate our pruned strategy, we compare the estimated Shapley rankings under different coalition truncations. As shown in Table~\ref{tab:complexity_analysis}, the relative ranking of evidence contributions remains stable in near-complete coalitions. So, Our diminishing-marginal-utility observation and high-order truncation characterize two complementary aspects of evidence contribution. The former describes the absolute marginal gain of adding a segment decreasing as more evidences including in the context. The latter offers an efficient way to estimate shapley values.

\subsection{Priority-Based Efficient Storage}
\label{sec:Storage}
In a steel mill, the LLM's context window is a critical resource. When memory capacity is constrained, the system must decide which inspection records to retain and which to evict. A key observation is that inspection records exhibit a long-tail distribution: a small number of critical anomalies (e.g., belt peeling, fire extinguisher underpressure) are queried frequently, while the majority of routine status are rarely accessed. This observation suggests a memory-saving strategy that prioritizes high-value anomalies.

Specifically, our priority-based storage module directly leverages the Shapley values computed in the previous stage. Segments are stored in a max-heap ordered by $\hat{\phi}_i$. When memory capacity is constrained, the lowest-contribution segments are evicted first. This ensures that critical anomalies are never lost. 

For a new query $q$ such as ``Have any belt wear issues in the past six months?'', first, it retrieves top-$K$ segments semantically relevant to $q$ from high-shapley-value candidates. Then, to balance memory retention and retrieval efficiency, we define an eviction threshold $\theta$ that adapts to current memory pressure:
\[
\theta = \text{percentile}(\{\hat{\phi}_i\}_{i=1}^{m}, p), \tag{7}
\]
where $p$ is the target retention percentile (e.g., $p=20\%$ retains the top 20\% of segments by contribution). Finally, all segments with $\hat{\phi}_i < \theta$ are candidates for eviction when the context window is full. This adaptive strategy enables the agent to maintain a compact yet informative memory store.

\noindent\textbf{Effectiveness Analysis.} We evaluate the proposed storage strategy by varying the proportion of Shapley-ranked evidence retained in memory. As shown in Table~\ref{tab:retention_ratio}, retaining only the top 25\%-50\% of Shapley-ranked evidence achieves superior accuracy, indicating that the proposed priority-based storage maintains a compact memory for efficient retrieval.

\begin{table}[t]
\centering
\caption{Analysis result of different Shapley-ranked ratio.}
\label{tab:retention_ratio}
\vspace{-1.2em}
\begin{tabular}{lcccccc}
\toprule
Strategy & 10\% & 25\% & 50\% & 75\% & 90\% & 100\% \\
\midrule
Acc.(\%) & 51 & 76 & 72 & 57 & 53 & 52 \\
\bottomrule
\end{tabular}
\vspace{-1.2em}
\end{table}

\begin{algorithm}[!htb]
\caption{ConMem for equipment inspection}
\vspace{-0.2em}
\label{alg:ConMem}
\begin{algorithmic}[1]
\REQUIRE Inspection logs $\mathcal{D}$, query $q_t$, inspection prompt $X_{qt}$, budget $\rho$
\ENSURE Response $a_{t'}$, retrieved memory $\mathcal{M}_{qt}$

\STATE \textbf{Stage 1: Functional-Role-Aware Segmentation}
\STATE Extract noun phrases (e.g., ``belt peeling'', ``fire extinguisher'') from $\mathcal{D}$;
\STATE Detect boundaries via $\text{sim}_t < \tau$, output segments $\mathcal{S}$;

\STATE \textbf{Stage 2: Shapley Valuation}
\STATE Compute $\phi_i = \sum_{T} \frac{U(M_{T\cup\{s_i\}}) - U(M_{T})}{m \cdot \binom{m-1}{|T|}}$ for each $s_i \in \mathcal{S}$;

\STATE \textbf{Stage 3: Priority Storage \& Retrieval}
\STATE Store segments in max-heap ordered by $\hat{\phi}_i$;
\STATE Retrieve top-$K$ segments $\mathcal{M}_{qt}$ relevant to $q_t$;
\STATE Generate $a = \mathrm{LLM}({q_t}, \mathcal{M}_{qt},X_{qt})$;

\RETURN $a_{t'}$, $\mathcal{M}_{qt}$

\end{algorithmic}
\vspace{-0.2em}
\end{algorithm}

\subsection{Complexity Analysis of ConMem}

Algorithm~\ref{alg:ConMem} presents the pseudo-code of our ConMem framework for steel equipment inspection. The sampling algorithm requires $O(\rho)$ LLM inferences, where $\rho$ is the sampling budget. The relationship between sampling budget $\rho$ and the minimum evaluated combination size $k^*$ is given by, $\rho = \sum_{j=k*}^{n} \binom{m}{j}$, with typical settings where $k^* \ge n-2$ and $m \le 20$, which is common in steel inspection where approximately 20 anomaly segments are extracted from each batch of inspection logs, This reduces the number of utility evaluations from exponential $O(2^n)$ to quadratic $O(n^2)$  while maintaining approximation accuracy within $1\%$.
Thus, the overall time complexity of our method for a single inspection query is:
\[
T_{\text{total}} = T_{\text{seg}} + T_{\text{val}} + T_{\text{ret}} = O(n \cdot d) + O(\rho \cdot \tau) + O(K \log m), \tag{9}
\]
where $T_{\text{seg}}$ is the segmentation cost, linear in log length $n$ and embedding dimension $d$. $T_{\text{val}}$ is the Shapley valuation cost, requiring $\rho$ LLM calls, and $T_{\text{ret}}$ is the retrieval cost, logarithmic in memory size $m$. For steel equipment inspection, this translates to sub-second response times, reducing from Naive Context retrieval's 18.76 seconds to about 2.5 seconds, thereby enabling in time anomaly detection.

\section{EXPERIMENTS}

In this section, we conduct comprehensive experiments to evaluate the proposed ConMem framework on a steel equipment inspection dataset. We aim to answer the following research questions:
\begin{itemize}[leftmargin=1.5em, itemsep=1pt]
    \setlength{\topsep}{-0.5em}       
    \setlength{\itemsep}{0pt}      
    \setlength{\parsep}{0pt}
    \setlength{\partopsep}{0pt}

    \item \textbf{RQ1:} Can ConMem achieve comparable or superior accuracy to Full RAG while using significantly fewer tokens?
    \item \textbf{RQ2:} How does ConMem compare against heuristic memory selection strategies in terms of retrieval efficiency?
    \item \textbf{RQ3:} What is the impact of the diminishing marginal utility phenomenon on memory retention?
\end{itemize}



\begin{table*}[t]
\centering

\captionsetup{skip=2pt}

\caption{\textbf{Comprehensive performance comparison on the steel equipment inspection QA task.}}
\label{tab:comprehensive_main_results}

\scriptsize
\setlength{\tabcolsep}{3pt}

\renewcommand{\arraystretch}{0.98}

\setlength{\aboverulesep}{0.25ex}
\setlength{\belowrulesep}{0.25ex}

\resizebox{\textwidth}{!}{
\begin{tabular}{llc*{6}{c}}
\toprule

\multirow{3}{*}{\textbf{Family}}
& \multirow{3}{*}{\textbf{Method}}
& \multirow{3}{*}{\textbf{Memory Budget}}
& \multicolumn{1}{c}{\textbf{QA Metrics}}
& \multicolumn{4}{c}{\textbf{Efficiency}}
& \multirow{3}{*}{\textbf{Acc./1K}}
\\

\cmidrule(lr){4-4}
\cmidrule(lr){5-8}

& &
& \textbf{Acc.}
& \textbf{Input Tok.}
& \textbf{Tok. Red.}
& \textbf{Time}
& \textbf{Time Red.}
&
\\

& &
& \textbf{(\%) $\uparrow$}
&
& \textbf{(\%) $\uparrow$}
& \textbf{(ms) $\downarrow$}
& \textbf{(\%) $\uparrow$}
& $\uparrow$
\\
\midrule


\multirow{3}{*}{\makecell[l]{Context /\\retrieval}}

& Naive Context-8K
& Full
& 46.0
& 10086.4
& 0.0
& 18760.0
& 0.0
& 4.56
\\

& Full RAG
& Top-10
& 55.0
& 4496.2
& 55.4
& 2934.4
& 84.4
& 12.23
\\

& Full Event Retrieval
& Full
& 72.0
& 2251.1
& 77.7
& 3837.4
& 79.5
& 31.98
\\

\midrule


\multirow{6}{*}{\makecell[l]{Heuristic\\memory}}

& \multirow{3}{*}{Random Memory}
& 10\%
& 31.0
& 1193.3
& 88.2
& 2020.9
& 89.2
& 25.98
\\

&
& 25\%
& 50.0
& 1191.7
& 88.2
& 2364.9
& 87.4
& 41.96
\\

&
& 50\%
& 65.0
& 1189.0
& 88.2
& 2340.3
& 87.5
& 54.67
\\

&
\multirow{3}{*}{Rule-based Memory}
& 10\%
& 40.0
& 1199.5
& 88.1
& 2206.3
& 88.2
& 33.35
\\

&
& 25\%
& 52.0
& 1200.5
& 88.1
& 2403.3
& 87.2
& 43.32
\\

&
& 50\%
& 64.0
& 1196.9
& 88.1
& 2414.7
& 87.1
& 53.47
\\

\midrule


\multirow{4}{*}{\makecell[l]{External\\agent memory}}

& A-MEM~\cite{ref35}
& Top-20
& 62.0
& 3702.1
& 63.3
& 8910.8
& 52.5
& 16.75
\\

& MemoRAG~\cite{ref36}
& Top-20
& 72.0
& 4404.2
& 56.3
& 9977.9
& 46.8
& 16.35
\\

& ActMem~\cite{ref37}
& Top-20
& 73.4
& 1176.5
& 88.3
& 10507.3
& 44.1
& 62.39
\\

& Mem0~\cite{ref38}
& Top-20
& \underline{74.7}
& 1284.3
& 87.3
& 2750.5
& 85.3
& 58.16
\\

\midrule


\multirow{5}{*}{\makecell[l]{ConMem /\\Ablation}}

& \multirow{3}{*}{ConMem}
& 10\%
& 51.0
& 1191.5
& 88.2
& 2031.6
& 89.2
& 42.80
\\

&
& 25\%
& \cellcolor{gray!15}\textbf{76.0}
& \cellcolor{gray!15}1192.1
& \cellcolor{gray!15}88.2
& \cellcolor{gray!15}2510.4
& \cellcolor{gray!15}86.6
& \cellcolor{gray!15}\textbf{63.75}
\\

&
& 50\%
& 72.0
& 1192.8
& 88.2
& 3099.2
& 83.5
& 60.36
\\

&
\multirow{2}{*}{ConMem Ablation}
& Record-level SSV
& 32.0
& 1088.0
& 89.2
& 1811.2
& 90.3
& 29.41
\\

&
& No-pruning Shapley
& 52.0
& 1336.8
& 86.7
& 2780.0
& 85.2
& 38.90
\\

\bottomrule
\end{tabular}
}

\vspace{2pt}

\begin{minipage}{0.98\textwidth}
\footnotesize
\textit{Notes.}
Tok. Red. denotes token reduction relative to the average input tokens of Naive Context-8K.
Acc./1K denotes answer accuracy divided by average input tokens and multiplied by 1000.
Bold and underlined entries indicate the best and second-best accuracy values.
\end{minipage}

\end{table*}
\subsection{Experimental Setup}

\textbf{Dataset.} 
From real-world steel equipment inspection practices, we construct a large-scale inspection dataset based on authentic inspection records.
The dataset contains approximately 30,080 inspection entries, each preserving key fields: equipment ID, component, timestamp, responsible personnel, and anomaly description. From the anomaly descriptions, we automatically extract anomaly event cards (e.g., ``belt back peeling severe'', ``fire extinguisher underpressure'', ``dust pipe worn'') to build the memory candidate pool. Gold labels are hidden as standard answers for evidence-level validation.

\noindent\textbf{Evaluation Metrics.} We adopt accuracy (recall of subsequently confirmed anomalies), Token Overhead, and Response Time (ms).

\noindent\textbf{Implementation Details.} Experiments are conducted on a server equipped with an NVIDIA GeForce RTX 5090 GPU and 128GB main memory. The LLM backbone is GPT-4o~\cite{ref15}. We use the all-MiniLM-L6-v2~\cite{ref18} as the frozen sentence encoder. For ConMem, we set the sampling budget $\rho$ according to minimum combination size $k^*$, and $k^* = n-2$. For records look-back window, we set W=366 days. For human reinspection horizon, we set $h_d$=14 days. For Full RAG, the context window is set to 8K tokens. The coefficients $\alpha =0.45, \beta =0.45, \gamma=0.1$ defined in the utility function $U(\cdot)$ are tuned specifically for steel inspection tasks. We construct queries based on inspection logs and partition them into training and test sets by year. The training of the lightweight encoder $E_{\text{light}}$ and the evaluation of the utility function $U(\cdot)$ are conducted on the training set, and experiments are performed on the test set. 

\noindent\textbf{Baselines.}
We compare ConMem with representative baselines from four categories, including several state-of-art methods.

\noindent\textbf{(i) Context-based retrieval methods.}
These methods directly provide raw historical inspection logs to the LLM.
\begin{itemize}[leftmargin=0.8em, itemsep=2pt]
    \item \textbf{Naive Context-8K}: It directly uses the most recent records within an 8K-token context window through TF-IDF retrieval and greedy 8K packing strategy.
    \item \textbf{Full RAG}: It represents methods that retrieve relevant inspection records from the complete history without memory construction. We implement it using a two-stage retrieval pipeline, where TF-IDF retrieves top-100 candidates and E5 reranking selects the final top-10 records.
    \item \textbf{Full Event Retrieval}: It directly provides complete structured inspection events (i.e., event cards) from the entire historical records to the LLM without memory compression or pruning, serving as an upper-bound retrieval setting with full access to historical information.
\end{itemize}
\noindent\textbf{(ii) Heuristic Memory methods.}
These methods construct memories using heuristic rules instead of value-aware selection.
\begin{itemize}[leftmargin=0.8em, itemsep=2pt]
    \item \textbf{Random Memory}: It randomly selects memory segments from historical records under the same memory budget as ConMem.
    \item \textbf{Rule-based Memory}: It selects memory segments according to heuristic criteria, including recency and access frequency.
\end{itemize}
\noindent\textbf{(iii) External agent memory methods.}
We further compare ConMem with representative memory-augmented LLM frameworks that maintain and retrieve long-term memories.
\begin{itemize} [leftmargin=0.8em, itemsep=2pt]
    \item \textbf{A-MEM~\cite{ref35}}: It constructs structured memories from historical interactions and performs memory retrieval through its official memory organization pipeline.
    \item \textbf{MemoRAG~\cite{ref36}}: It combines retrieval-augmented generation with memory clues to enable efficient long-horizon access.
    \item \textbf{ActMem~\cite{ref37}}: It constructs a structured causal–semantic memory graph from atomic facts extracted from historical interactions, and performs retrieval.
    \item \textbf{Mem0~\cite{ref38}}: It provides a memory layer that extracts, stores, and retrieves persistent memories via an LLM-driven extraction pipeline with semantic retrieval over a vector database.
\end{itemize}
\noindent\textbf{(iv) ConMem variants.}
We perform ablations to examine the impact of fine-grained memory valuation and pruning.
\begin{itemize}[leftmargin=0.8em, itemsep=2pt]
    \item \textbf{Record-level SSV}: It applies the proposed Shapley-based valuation to raw inspection records rather than fine-grained evidence units, assessing the effect of memory granularity.
    \item \textbf{No-pruning Shapley}:It keeps all memory units and uses Shapley values only for valuation, removing memory pruning.
\end{itemize}

\subsection{Main Results}

Table~\ref{tab:comprehensive_main_results} reports the overall performance of all compared methods.
\textbf{For RQ1}, ConMem achieves 76.0\% accuracy, outperforming Mem0~\cite{ref38} 74.7\%, while reducing token overhead by 88.2\% and response time by 86.6\% compared to Full RAG.
\textbf{For RQ2}, ConMem exhibits the highest accuracy for high-value anomalies and the lowest time and token consumption, confirming that the priority-based retention can effectively preserve critical information.
Besides, \figref{fig:seasonal_all_methods} further confirms the robustness of ConMem.

Although Mem0~\cite{ref38} achieves accuracy and efficiency comparable to ConMem when GPT-4o is used as the backbone, this result should be interpreted together with its sensitivity to backbone capability. When GLM-4~\cite{ref17} is used as the backbone, the accuracy of Mem0 decreases substantially to 45.9\%, revealing a strong dependence on the LLM's implicit memory-management capability. That's because Mem0 does not explicitly design a mechanism for retrieval or storage. Its memory extraction, consolidation, and update decisions are largely delegated to the backbone LLM. In contrast, even under the GLM-4 LLM, the accuracy of ConMem remains 76\%. Since its effectiveness is derived not only from the reasoning capability of the backbone, but also from a model-independent memory-management mechanism. This distinction is particularly important for industrial deployment, where reliance on expensive proprietary LLM models may be impractical and the memory system must remain effective across heterogeneous backbone models.

\subsection{Ablation Study}
Table~\ref{tab:ablation} presents ablation results to isolate the contribution of each ConMem component. \textbf{Addressing RQ3}, the 25\% ConMem model achieves the best performance. Removing functional-role-aware segmentation degrades accuracy to 32.0\%, confirming its critical role in helping fine-grained evidence valuation. Removing Shapley valuation reduces accuracy to 52.0\%, while removing importance pruning yields a moderate drop to 66.0\%. These results validate that the diminishing marginal utility phenomenon is correctly captured by our Shapley-based valuation.
\begin{table}[!htb]
\centering
\vspace{-1em}  
\caption{Ablation study of ConMem components.}
\label{tab:ablation}
\small
\vspace{-1em}  
\begin{tabular}{lc}
\hline
\textbf{Configuration} & \textbf{Acc.} \\
\hline
ConMem (25\%) & 76.0 \\
w/o functional-role-aware seg. & 32.0 \\
w/o Shapley valuation & 52.0 \\
w/o importance pruning & 66.0 \\
\hline
\end{tabular}
\vspace{-1.5em} 
\end{table}

\begin{figure}[thb]
    \centering
    \includegraphics[width=\linewidth]{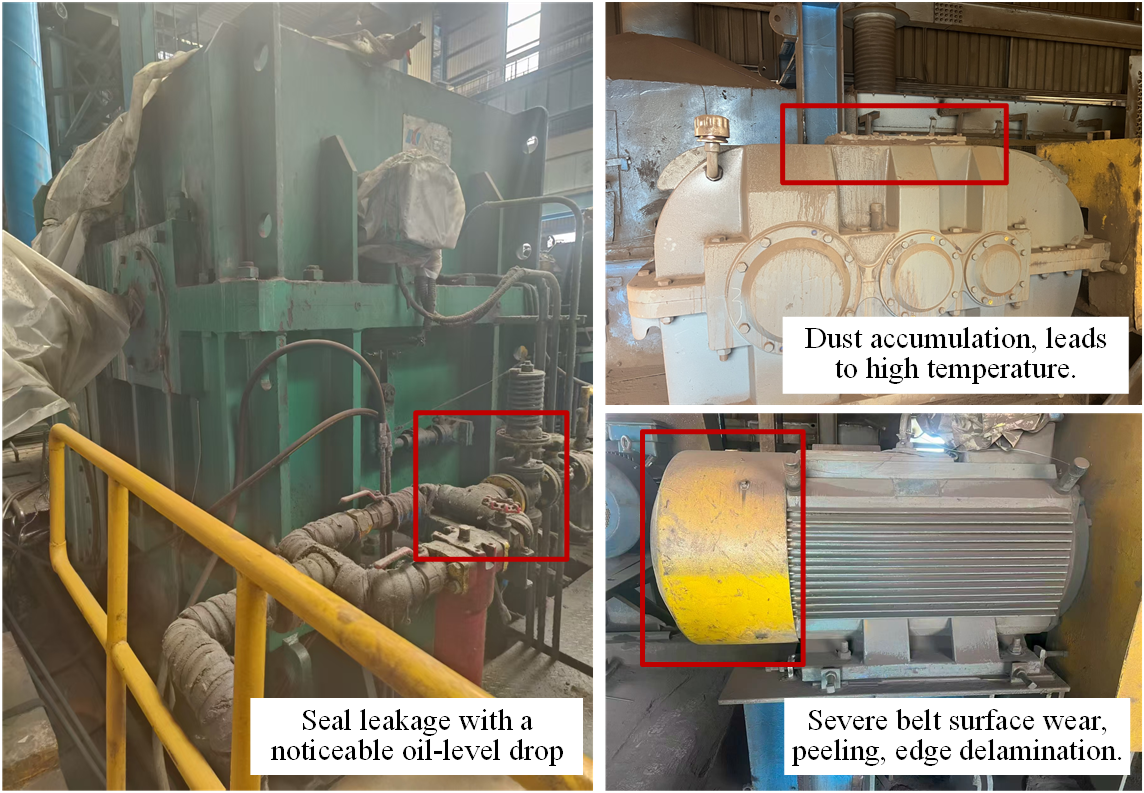}
    \caption{Deployment Cases: On-site inspection.}
    \label{fig:cases}
    \vspace{-2em}
\end{figure}

\begin{figure*}[t]
\centering
\vspace{-1em}

\begin{tikzpicture}
\node[
    draw=gray!40,
    fill=white,
    rounded corners=0.5pt,
    inner xsep=6pt,
    inner ysep=2pt
]{
\scriptsize
\tikz \filldraw[draw=black,fill=cNaive] (0,0) rectangle (0.32,0.16); \hspace{0.2em} Naive Context-8K
\hspace{1em}
\tikz \filldraw[draw=black,fill=cRandom] (0,0) rectangle (0.32,0.16); \hspace{0.2em} Random Memory
\hspace{1.0em}
\tikz \filldraw[draw=black,fill=cRule] (0,0) rectangle (0.32,0.16); \hspace{0.2em} Rule-based Memory
\hspace{1.0em}
\tikz \filldraw[draw=black,fill=cAMEM] (0,0) rectangle (0.32,0.16); \hspace{0.2em} A-MEM
\hspace{1.0em}
\tikz \filldraw[draw=black,fill=cMemoRAG] (0,0) rectangle (0.32,0.16); \hspace{0.2em} MemoRAG
\hspace{1.0em}
\tikz \filldraw[draw=black,fill=cConMem] (0,0) rectangle (0.32,0.16); \hspace{0.2em} \textbf{ConMem}
};
\end{tikzpicture}

\vspace{-0.6em}

\begin{minipage}[t]{0.242\textwidth}
\centering
\begin{tikzpicture}
\begin{axis}[
    ybar,
    width=\linewidth,
    height=0.72\linewidth,
    ymin=0,
    ymax=102,
    xmin=0.35,
    xmax=1.65,
    bar width=5pt,
    ylabel={Accuracy (\%)},
    ylabel style={font=\scriptsize},
    y tick label style={font=\scriptsize},
    xtick=\empty,
    xlabel={(a) Spring, $N=19$},
    xlabel style={font=\small, yshift=2pt},
    ytick={0,20,40,60,80,100},
    ymajorgrids=true,
    grid style={gray!20},
    axis line style={black},
    tick align=inside,
    nodes near coords,
    nodes near coords style={
        font=\tiny,
        rotate=0,
        anchor=south,
        yshift=-0.8pt
    }
]

\addplot[draw=black, fill=cNaive,   bar shift=-30pt] coordinates {(1,66.5)};
\addplot[draw=black, fill=cRandom,  bar shift=-18pt]  coordinates {(1,73.7)};
\addplot[draw=black, fill=cRule,    bar shift=-6pt]  coordinates {(1,47.4)};
\addplot[draw=black, fill=cAMEM,    bar shift=6pt]   coordinates {(1,72.3)};
\addplot[draw=black, fill=cMemoRAG, bar shift=18pt]   coordinates {(1,67.1)};
\addplot[draw=black, fill=cConMem,  line width=0.7pt, bar shift=30pt] coordinates {(1,78.9)};

\end{axis}
\end{tikzpicture}
\end{minipage}
\hfill
\begin{minipage}[t]{0.242\textwidth}
\centering
\begin{tikzpicture}
\begin{axis}[
    ybar,
    width=\linewidth,
    height=0.72\linewidth,
    ymin=0,
    ymax=102,
    xmin=0.35,
    xmax=1.65,
    bar width=5pt,
    y tick label style={font=\scriptsize},
    xtick=\empty,
    xlabel={(b) Summer, $N=18$},
    xlabel style={font=\small, yshift=2pt},
    ytick={0,20,40,60,80,100},
    ymajorgrids=true,
    grid style={gray!20},
    axis line style={black},
    tick align=inside,
    nodes near coords,
    nodes near coords style={
        font=\tiny,
        rotate=0,
        anchor=south,
        yshift=-0.8pt
    }
]

\addplot[draw=black, fill=cNaive,   bar shift=-30pt] coordinates {(1,64.8)};
\addplot[draw=black, fill=cRandom,  bar shift=-18pt]  coordinates {(1,66.7)};
\addplot[draw=black, fill=cRule,    bar shift=-6pt]  coordinates {(1,44.4)};
\addplot[draw=black, fill=cAMEM,    bar shift=6pt]   coordinates {(1,83.3)};
\addplot[draw=black, fill=cMemoRAG, bar shift=18pt]   coordinates {(1,66.2)};
\addplot[draw=black, fill=cConMem,  line width=0.7pt, bar shift=30pt] coordinates {(1,88.9)};

\end{axis}
\end{tikzpicture}
\end{minipage}
\hfill
\begin{minipage}[t]{0.242\textwidth}
\centering
\begin{tikzpicture}
\begin{axis}[
    ybar,
    width=\linewidth,
    height=0.72\linewidth,
    ymin=0,
    ymax=102,
    xmin=0.35,
    xmax=1.65,
    bar width=5pt,
    y tick label style={font=\scriptsize},
    xtick=\empty,
    xlabel={(c) Autumn, $N=26$},
    xlabel style={font=\small, yshift=2pt},
    ytick={0,20,40,60,80,100},
    ymajorgrids=true,
    grid style={gray!20},
    axis line style={black},
    tick align=inside,
    nodes near coords,
    nodes near coords style={
        font=\tiny,
        rotate=0,
        anchor=south,
        yshift=-0.8pt
    }
]

\addplot[draw=black, fill=cNaive,   bar shift=-30pt] coordinates {(1,26.9)};
\addplot[draw=black, fill=cRandom,  bar shift=-18pt]  coordinates {(1,34.6)};
\addplot[draw=black, fill=cRule,    bar shift=-6pt]  coordinates {(1,57.7)};
\addplot[draw=black, fill=cAMEM,    bar shift=6pt]   coordinates {(1,50.0)};
\addplot[draw=black, fill=cMemoRAG, bar shift=18pt]   coordinates {(1,53.8)};
\addplot[draw=black, fill=cConMem,  line width=0.7pt, bar shift=30pt] coordinates {(1,84.6)};

\end{axis}
\end{tikzpicture}
\end{minipage}
\hfill
\begin{minipage}[t]{0.242\textwidth}
\centering
\begin{tikzpicture}
\begin{axis}[
    ybar,
    width=\linewidth,
    height=0.72\linewidth,
    ymin=0,
    ymax=102,
    xmin=0.35,
    xmax=1.65,
    bar width=5pt,
    y tick label style={font=\scriptsize},
    xtick=\empty,
    xlabel={(d) Winter, $N=37$},
    xlabel style={font=\small, yshift=2pt},
    ytick={0,20,40,60,80,100},
    ymajorgrids=true,
    grid style={gray!20},
    axis line style={black},
    tick align=inside,
    nodes near coords,
    nodes near coords style={
        font=\tiny,
        rotate=0,
        anchor=south,
        yshift=-0.8pt
    }
]

\addplot[draw=black, fill=cNaive,   bar shift=-30pt] coordinates {(1,37.8)};
\addplot[draw=black, fill=cRandom,  bar shift=-18pt]  coordinates {(1,40.5)};
\addplot[draw=black, fill=cRule,    bar shift=-6pt]  coordinates {(1,54.1)};
\addplot[draw=black, fill=cAMEM,    bar shift=6pt]   coordinates {(1,45.9)};
\addplot[draw=black, fill=cMemoRAG, bar shift=18pt]   coordinates {(1,55.6)};
\addplot[draw=black, fill=cConMem,  line width=0.7pt, bar shift=30pt] coordinates {(1,62.2)};

\end{axis}
\end{tikzpicture}
\end{minipage}

\vspace{-1em}

\caption{\textbf{Seasonal accuracy comparison across representative methods.}}
\label{fig:seasonal_all_methods}

\vspace{-1em}
\end{figure*}

\begin{table*}[t]
\centering
\caption{Case Study on top-3 retrieval results. Later Confirmation serves as ground truths and are not provided to the models.}
\label{tab:case_study}
\footnotesize
\setlength{\tabcolsep}{3pt}
\renewcommand{\arraystretch}{0.96}
\vspace{-1em}

\noindent
\textbf{Query:}
\textit{Has the No. 3 conveyor belt shown signs of wear in the past three months?}
\par\vspace{1.5pt}

\begin{tabularx}{\textwidth}{
  @{}Y{0.65}
  Y{1.00}
  Y{1.00}
  Y{1.35}@{}}
\toprule
\textbf{Ground Truth}
&
\textbf{Naive Context-8K}
&
\textbf{Rule-based Memory (25\%)}
&
\textbf{ConMem (25\%)}
\\
\midrule

\textbf{Later confirmation}\par
04-03: Severe surface wear, peeling, edge delamination, and exposed
reinforcement cords were confirmed.

&
\textbf{Top-3 evidence}\par
\textbf{1.} 01-24: Severe belt misalignment.\par
\textbf{2--3.} No relevant degradation evidence.\par
\vspace{1pt}
\textbf{Answer}\par
\textcolor{red!75!black}{No clear progressive wear.}

&
\textbf{Top-3 evidence}\par
\textbf{1.} 02-26: Surface wear with exposed reinforcement cords.\par
\textbf{2.} 01-24: Severe belt misalignment.\par
\textbf{3.} No relevant degradation evidence.\par
\vspace{1pt}
\textbf{Answer}\par
\textcolor{red!75!black}{An isolated wear event; no trend confirmed.}

&
\textbf{Top-3 evidence}\par
\textbf{1.} 02-26: Surface wear with exposed reinforcement cords.\par
\textbf{2.} 01-24: Severe belt misalignment.\par
\textbf{3.} 01-18: Slight belt misalignment.\par
\vspace{1pt}
\textbf{Answer}\par
\textcolor{green!50!black}{Yes. Slight misalignment progressed to severe
misalignment and surface wear.}
\\
\bottomrule
\end{tabularx}

\vspace{3pt}

\noindent
\textbf{Query:}
\textit{Do recent records indicate early mechanical-seal degradation in
the No. 3 trough-front oil pump?}
\par\vspace{1.5pt}

\begin{tabularx}{\textwidth}{
  @{}Y{0.65}
  Y{1.00}
  Y{1.00}
  Y{1.35}@{}}
\toprule
\textbf{Ground Truth}
&
\textbf{ActMem~\cite{ref37}}
&
\textbf{Mem0~\cite{ref38}}
&
\textbf{ConMem (25\%)}
\\
\midrule

\textbf{Later confirmation}\par
06-12: Mechanical-seal wear reached the warning threshold, and
replacement parts were recommended.

&
\textbf{Top-3 evidence}\par
\textbf{1.} 05-02: Seal inspection was normal.\par
\textbf{2.} 05-27: Inlet/outlet flanges visually inspected.\par
\textbf{3.} 05-28: Oil-pump inspection was normal.\par
\vspace{1pt}
\textbf{Answer}\par
\textcolor{red!75!black}{No evidence of early seal degradation.}

&
\textbf{Top-3 evidence}\par
\textbf{1.} 05-10: Foundation bolts normal.\par
\textbf{2.} 05-04: Abnormal noise near the coupling guard.\par
\textbf{3.} 05-27: Inlet/outlet flanges visually inspected.\par
\vspace{1pt}
\textbf{Answer}\par
\textcolor{green!50!black}{Possible early degradation.}

&
\textbf{Top-3 evidence}\par
\textbf{1.} 05-31: Metallic friction noise from the mechanical seal.\par
\textbf{2.} 05-29: Recurrent seepage at the pump-body joint.\par
\textbf{3.} 05-14: Seal leakage with a noticeable oil-level drop.\par
\vspace{1pt}
\textbf{Answer}\par
\textcolor{green!50!black}{Yes. Recurrent leakage followed by metallic
friction indicates an emerging degradation pattern.}
\\
\bottomrule
\end{tabularx}

\vspace{3pt}

\noindent
\textbf{Query:}
\textit{Has the reducer exhibited any abnormalities in the past six months?}
\par\vspace{1.5pt}

\begin{tabularx}{\textwidth}{
  @{}Y{0.65}
  Y{1.00}
  Y{1.00}
  Y{1.35}@{}}
\toprule
\textbf{Ground Truth}
&
\textbf{ActMem~\cite{ref37}}
&
\textbf{Mem0~\cite{ref38}}
&
\textbf{ConMem (25\%)}
\\
\midrule

\textbf{Later confirmation}\par
06-10: Temperature is excessively high, measured at
77.8~$^\circ$C, exceeding the standard limit of 65~$^\circ$C.

&
\textbf{Top-3 evidence}\par
\textbf{1.} 05-02: Main shaft bearing normal.\par
\textbf{2.} 05-27: Gearbox temperature normal.\par
\textbf{3.} 05-27: Appearance normal.\par
\vspace{1pt}
\textbf{Answer}\par
\textcolor{red!75!black}{No evidence of reducer anomalies.}

&
\textbf{Top-3 evidence}\par
\textbf{1.} 05-02: Main shaft Bearing normal.\par
\textbf{2.} 05-27: Gearbox temperature normal.\par
\textbf{3.} 05-27: Appearance normal.\par
\vspace{1pt}
\textbf{Answer}\par
\textcolor{red!75!black}{No evidence of reducer anomalies.}

&
\textbf{Top-3 evidence}\par
\textbf{1.} 04-30: Oil seepage on coupling guard surface.\par
\textbf{2.} 03-27: Loosing bolts.\par
\textbf{3.} 01-14: Oil seepage on coupling guard surface.\par
\vspace{1pt}
\textbf{Answer}\par
\textcolor{red!75!black}{Yes. Oil seepage and loosing bolts.}
\\
\bottomrule
\end{tabularx}

\vspace{-1em}
\end{table*}

\section{DEPLOYMENT AND CASE STUDY}
We deployed our ConMem in an operational equipment-inspection system by a steel manufacturing industry partner in Northern China. 
The deployed system connects to the existing inspection platform and retrieves inspection records from the previous year as its historical evidence repository.
During a 15-day field trial, ConMem assisted with 1,125 inspection tasks conducted over three inspection cycles per day and it served as a decision-support module integrated into the existing inspection workflow, where professional inspectors reviewed generated warnings and performed follow-up examinations when necessary.
The deployment queries were generated by professional inspectors based on practical scheduled targets and abnormality concerns encountered in routine operations.

We compared ConMem with a conventional threshold-based alarm method and a raw-record RAG baseline under the same deployment setting and inspection records.
An alert was considered successful when the corresponding abnormality was confirmed through an on-site reinspection conducted within 12 hours to 14 days after the alert.
The deployment coverage is summarized in Table~\ref{tab:deployment_results}.
Among the 1,125 inspections, 375 were eventually confirmed as abnormal.
ConMem correctly warned of 295 abnormalities and missed 80, achieving a recall of 78.7\%, compared with 53\% for the threshold-based method and 58\% for raw-record RAG.
ConMem generated 136 unconfirmed alerts, indicating that it is more suitable as a risk-prioritization and decision-support tool than as an automatic replacement for professional inspection.

\fakeparagraph{Case Study} 
In real-world deployment, ConMem issued warnings an average of 12 days before the corresponding abnormalities were identified during routine manual inspection.
Table~\ref{tab:case_study} reports two successfully predicted cases and one missed case.
As in Fig.~\ref{fig:cases}, two successful warnings guided inspectors to perform targeted examinations of the corresponding components, and subsequent field tests confirmed the predicted abnormalities.
These cases illustrate how ConMem provides actionable early-warning evidence.

\begin{table}[t]
\centering
\captionsetup{skip=2pt}
\caption{\textbf{Field deployment and online evaluation results.}}
\label{tab:deployment_results}

\scriptsize
\setlength{\tabcolsep}{2.5pt}
\renewcommand{\arraystretch}{0.92}

{\footnotesize
\begin{tabularx}{\columnwidth}{@{}lX@{}}
\toprule
\multicolumn{2}{c}{\textbf{Deployment Configuration}} \\
\midrule

\textbf{Deployment period}
& 2026/05/29--2026/06/13
\\

\textbf{Inspection duration}
& 15 days
\\

\textbf{Inspection frequency}
& Three rounds per day
\\

\textbf{Inspection tasks}
& 1,125
\\

\textbf{Confirmed abnormalities}
& 375
\\

\textbf{Confirmed normal cases}
& 750
\\

\bottomrule
\end{tabularx}
}
\resizebox{\columnwidth}{!}{
\begin{tabular}{@{}lrrrrrrrrr@{}}
\toprule

\textbf{Method}
& \textbf{TP}
& \textbf{FP}
& \textbf{TN}
& \textbf{FN}
& \textbf{P.}
& \textbf{R.}
& \textbf{Acc.}
& \makecell{\textbf{Resp.}\\\textbf{(ms)}}
& \makecell{\textbf{Lead}\\\textbf{(days)}}
\\

\midrule

Threshold-based
& 197
& \textbf{70}
& \textbf{680}
& 178
& \textbf{73.8}
& 52.5
& 78.0
& 20000.0
& 0
\\

Raw-log RAG
& 216
& 120
& 630
& 159
& 64.3
& 57.6
& 75.2
& 2934.4
& 1
\\

\cellcolor{gray!15}\textbf{ConMem}
& \cellcolor{gray!15}\textbf{295}
& \cellcolor{gray!15}136
& \cellcolor{gray!15}614
& \cellcolor{gray!15}\textbf{80}
& \cellcolor{gray!15}68.4
& \cellcolor{gray!15}\textbf{78.7}
& \cellcolor{gray!15}\textbf{80.8}
& \cellcolor{gray!15}\textbf{2510.4}
& \cellcolor{gray!15}\textbf{12}
\\

\bottomrule
\end{tabular}
}
\vspace{1pt}

\begin{minipage}{0.98\columnwidth}
\footnotesize
\setlength{\parskip}{0pt}
\textit{Notes.}
P. and R. denote precision and recall, respectively.
Resp. denotes average response time.
Lead denotes the average time by which correctly predicted abnormalities were identified earlier than the corresponding threshold-based alarms.
\end{minipage}

\vspace{-1em}
\end{table}

\section{CONCLUSION}

In this paper, we investigate the memory valuation problem in LLM-assisted equipment inspection scenario and introduce ConMem, a principled and interpretable framework, supporting a human-in-the-loop early-risk screening system. Specifically, we first propose a functional-role-aware segmentation module that parses raw inspection logs into functional evidence units, ensuring critical anomalies like belt peeling and fire extinguisher underpressure remain intact. We also identify a crucial phenomenon called diminishing marginal utility, where the additional benefit of each extra evidence segment decreases rapidly as more historical context is provided. Building upon this finding, we quantify the contribution of each evidence segment via Shapley value and design an efficient approximation algorithm that prunes low-impact combinations, significantly improving storage efficiency while maintaining valuation accuracy. Finally, we conduct extensive experiments on a real-world steel equipment inspection dataset, demonstrating that ConMem achieves 76.0\% early warning accuracy while reducing token overhead by 88.2\% and response time by 86.6\%, significantly outperforming existing heuristic approaches in storage efficiency, early-warning accuracy, and long-term reasoning capability.



\clearpage

\bibliographystyle{ACM-Reference-Format}
\balance
\bibliography{reference}

@article{ref4,
  author  = {Junru Lu and Siyu An and Mingbao Lin and Gabriele Pergola and Yulan He and others},
  title   = {MemoChat: Tuning LLMs to Use Memos for Consistent Long-Range Open-Domain Conversation},
  journal = {arXiv preprint arXiv:2308.08239},
  year    = {2023}
}

@article{ref5,
  title={Consultation on industrial machine faults with large language models},
  author={Boonmee, Apiradee and Wongsuwan, Kritsada and Sukjai, Pimchanok},
  journal={arXiv preprint arXiv:2410.03223},
  year={2024}
}

@article{ref10,
  title={MemGPT: towards LLMs as operating systems.},
  author={Packer, Charles and Fang, Vivian and Patil, Shishir\_G and Lin, Kevin and Wooders, Sarah and Gonzalez, Joseph\_E},
  year={2023},
  publisher={ArXiv}
}

@Article{ref29,
AUTHOR = {Cao, Xiangang and Xu, Wangtao and Zhao, Jiangbin and Duan, Yong and Yang, Xin},
TITLE = {Research on Large Language Model for Coal Mine Equipment Maintenance Based on Multi-Source Text},
JOURNAL = {Applied Sciences},
VOLUME = {14},
YEAR = {2024},
NUMBER = {7},
ARTICLE-NUMBER = {2946},
ISSN = {2076-3417},
}

@Article{ref30,
AUTHOR = {Alsaif, Khalid M. and Albeshri, Aiiad A. and Khemakhem, Maher A. and Eassa, Fathy E.},
TITLE = {Multimodal Large Language Model-Based Fault Detection and Diagnosis in Context of Industry 4.0},
JOURNAL = {Electronics},
VOLUME = {13},
YEAR = {2024},
NUMBER = {24},
ARTICLE-NUMBER = {4912},
ISSN = {2079-9292},
}

@article{ref31,
  title={A value for n-person games},
  author={Shapley, Lloyd S and others},
  year={1953},
  publisher={Princeton University Press Princeton}
}

@article{ref32,
  title={A unified approach to interpreting model predictions},
  author={Lundberg, Scott M and Lee, Su-In},
  journal={NeurIPS},
  volume={30},
  year={2017}
}

@inproceedings{ref33,
  title={Data shapley: Equitable valuation of data for machine learning},
  author={Ghorbani, Amirata and Zou, James},
  booktitle={ICML},
  pages={2242--2251},
  year={2019},
  organization={PMLR}
}

@inproceedings{ref34,
  title={Dense passage retrieval for open-domain question answering},
  author={Karpukhin, Vladimir and Oguz, Barlas and Min and others},
  booktitle={EMNLP},
  pages={6769--6781},
  year={2020}
}

@article{ref35,
  title={A-mem: Agentic memory for llm agents},
  author={Xu, Wujiang and Liang, Zujie and Mei, Kai and Gao, Hang and Tan, Juntao and Zhang, Yongfeng},
  journal={NeurIPS},
  volume={38},
  pages={17577--17604},
  year={2026}
}

@article{ref36,
  author  = {Qian, Hongjin and Zhang, Peitian and Liu, Zheng and Mao, Kelong and Dou, Zhicheng},
  title   = {MemoRAG: Moving towards Next-Gen RAG via Memory-Inspired Knowledge Discovery},
  journal = {arXiv preprint arXiv:2409.05591},
  year    = {2024}
}

@article{ref37,
  author  = {Zhang, Xiaohui and Sun, Zequn and Yang, Chengyuan and Jin, Yaqin and Zhang, Yazhong and others},
  title   = {ActMem: Bridging the Gap Between Memory Retrieval and Reasoning in LLM Agents},
  journal = {arXiv preprint arXiv:2603.00026},
  year    = {2026}
}

@article{ref38,
  author  = {Chhikara, Prateek and Khant, Dev and Aryan, Saket and Singh, Taranjeet and Yadav, Deshraj},
  title   = {Mem0: Building Production-Ready AI Agents with Scalable Long-Term Memory},
  journal = {arXiv preprint arXiv:2504.19413},
  year    = {2025}
}

@inproceedings{ref11,
  title={Generative agents: Interactive simulacra of human behavior},
  author={Park, Joon Sung and O'Brien, Joseph and Cai, Carrie Jun and Morris, Meredith Ringel and Liang, Percy and Bernstein, Michael S},
  booktitle={UIST},
  pages={1--22},
  year={2023}
}

@article{ref13,
  title={Zep: a temporal knowledge graph architecture for agent memory},
  author={Rasmussen, Preston and Paliychuk, Pavlo and Beauvais, Travis and Ryan, Jack and Chalef, Daniel},
  journal={arXiv preprint arXiv:2501.13956},
  year={2025}
}

@article{ref15,
  title={Gpt-4o system card},
  author={Hurst, Aaron and Lerer, Adam and Goucher, Adam P and Perelman, Adam and Ramesh, Aditya and Clark, Aidan and Ostrow, AJ and Welihinda, Akila and Hayes, Alan and Radford, Alec and others},
  journal={arXiv preprint arXiv:2410.21276},
  year={2024}
}

@article{ref17,
  title={Chatglm: A family of large language models from glm-130b to glm-4 all tools},
  author={Glm, Team and Zeng, Aohan and Xu, Bin and Wang, Bowen and Zhang, Chenhui and Yin, Da and Zhang, Dan and Rojas, Diego and Feng, Guanyu and Zhao, Hanlin and others},
  journal={arXiv preprint arXiv:2406.12793},
  year={2024}
}

@article{ref18,
  title={Minilm: Deep self-attention distillation for task-agnostic compression of pre-trained transformers},
  author={Wang, Wenhui and Wei, Furu and Dong, Li and Bao, Hangbo and Yang, Nan and Zhou, Ming},
  journal={NeurIPS},
  volume={33},
  pages={5776--5788},
  year={2020}
}

@article{bib1001,
  author    = {Lu, Shouyin and Li, Yanping and Zhang, Tao},
  title     = {Design and implement of control system for power substation equipment inspection robot},
  booktitle = {IROS},
  pages     = {93--96},
  year      = {2009},
  publisher = {IEEE}
}

@article{bib1002,
  author    = {Chao, Jianshu and Liu, Ruoyun and An, Deyu and Zhu, Cheng and Lin, Bangjiang and others},
  title     = {Semantic segmentation of power equipment guided by multi-modal and multi-level wavelet for UAV inspections},
  journal   = {Pattern Recognition},
  volume    = {179},
  pages     = {113914},
  year      = {2026}
}

@article{bib1003,
  author    = {Okada, Yuya and Sugawara, Hiroki and Soya, Hiroaki and Hatanaka, Takeshi},
  title     = {Feasibility-aware hierarchical task assignment for equipment inspection robots with heterogeneous work efficiency},
  journal   = {Advanced Robotics},
  volume    = {39},
  number    = {2},
  pages     = {114--126},
  year      = {2025}
}

@article{bib1004,
  title={Virtual sensing of seismic floor responses for rapid prioritization of critical equipment inspection in nuclear power plants},
  author={Lee, Jingoo and Lee, Seungjun and Lee, Young-Joo and Lee, Jaebeom},
  journal={Computer-Aided Civil and Infrastructure Engineering},
  volume={40},
  number={26},
  pages={4669--4688},
  year={2025},
  publisher={Wiley Online Library}
}

@article{bib1005,
  author    = {Peng, Yuhuai and Wang, Chenlu and Hao, Yue and Zhen, Li and Chen, Guolong and others},
  title     = {High-Precision Surface Crack Detection for Rolling Steel Production Equipment in ICPS},
  journal   = {IEEE Internet of Things Journal},
  volume    = {11},
  number    = {3},
  pages     = {4586--4599},
  year      = {2024}
}

@article{bib1006,
  title={MC-YOLOv8: A Hybrid Attention and Adaptive Convolution Model for Power Equipment Fault Inspection: M. Liu et al.},
  author={Liu, Mincong and Kari, Tusongjiang and Yimamu, Aishan and Zhou, Yuanxiang and Ma, Xiaojing},
  journal={Signal, Image and Video Processing},
  volume={19},
  number={8},
  pages={653},
  year={2025},
  publisher={Springer}
}

@article{bib1007,
  title={Deep learning-based automated inspection of generic personal protective equipment},
  author={Rahman, Atta and Alatallah, Fahad Abdullah and Almubarak, Abdullah Jafar and Alkhazal, Haider Ali and Alzayer, Hasan Ali and Shaaban, Younis Zaki and Min-Allah, Nasro and Bakry, Aghiad and Aloup, Khalid},
  journal={Computers, Materials, \& Continua},
  volume={85},
  number={2},
  pages={3507},
  year={2025},
  publisher={Tech Science Press}
}

@article{bib1008,
  author    = {Zhu, Jiang and Fan, Chonggao and Li, Saisi and Nie, Keheng and Li, Jianqi and others},
  title     = {SEDNet: Substation Equipment Detection Network With an Attention Mechanism for UAV Automatic Power Inspection},
  journal   = {IEEE Transactions on Instrumentation and Measurement},
  volume    = {74},
  pages     = {1--14},
  year      = {2025}
}

@inproceedings{bib1009,
  author    = {Liu, Jianming and Li, Duanjiao and Zhang, Ying and Chen, Yun and Liu, Shengbo and others},
  title     = {Relative Pose Estimation of Substation Equipment for UAV Inspection via Deep Point Cloud Registration},
  booktitle = {ICIRA},
  pages     = {483--494},
  year      = {2025}
}

@article{bib1010,
  author    = {Lu, Weizhi and Li, Qiang and Zhang, Weijian and Mei, Lin and Cai, Di and others},
  title     = {Management of power equipment inspection informationization through intelligent unmanned aerial vehicles},
  journal   = {Artificial Life and Robotics},
  volume    = {29},
  number    = {4},
  pages     = {579--584},
  year      = {2024}
}

@article{bib1011,
  author    = {Huang, Daochun and Wang, Yiming and Li, Huipeng},
  title     = {Study on UAV Inspection Safety Distance of Substation High-Voltage and Current-Carrying Equipment Based on Power-Frequency Magnetic Field},
  journal   = {IEEE Transactions on Instrumentation and Measurement},
  volume    = {73},
  pages     = {1--8},
  year      = {2024}
}

@article{bib1012,
  author    = {Jiang, Qian and Liu, Yadong and Yan, Yingjie and Mao, Xianyin and Xu, Haoyu and others},
  title     = {BIM-Based 3-D Multimodal Reconstruction for Substation Equipment Inspection Images},
  journal   = {IEEE Transactions on Instrumentation and Measurement},
  volume    = {73},
  pages     = {1--14},
  year      = {2024}
}

@inproceedings{bib1013,
  title={Scm: Enhancing large language model with self-controlled memory framework},
  author={Wang, Bing and Liang, Xinnian and Yang, Jian and Huang, Hui and Wu, Zhenhe and Wu, ShuangZhi and Ma, Zejun and Li, Zhoujun},
  booktitle={International Conference on Database Systems for Advanced Applications},
  pages={188--203},
  year={2025},
  organization={Springer}
}

@inproceedings{bib1014,
  title={Memorybank: Enhancing large language models with long-term memory},
  author={Zhong, Wanjun and Guo, Lianghong and Gao, Qiqi and Ye, He and Wang, Yanlin},
  booktitle={AAAI},
  volume={38},
  number={17},
  pages={19724--19731},
  year={2024}
}

@article{bib1017,
  title={Online adaptation of language models with a memory of amortized contexts},
  author={Tack, Jihoon and Kim, Jaehyung and Mitchell, Eric and Shin, Jinwoo and Teh, Yee Whye and Schwarz, Jonathan Richard},
  journal={NeurIPS},
  volume={37},
  pages={130109--130135},
  year={2024}
}

@misc{bib1018,
      title={Agent Workflow Memory}, 
      author={Zora Zhiruo Wang and Jiayuan Mao and Daniel Fried and Graham Neubig},
      year={2024},
      eprint={2409.07429},
      archivePrefix={arXiv},
      primaryClass={cs.CL},
      url={https://arxiv.org/abs/2409.07429}, 
}

\appendix
\setcounter{equation}{0}
\section{Theoretical Analysis of ConMem Approximation}
\label{app:theory}

In this appendix, we provide a rigorous theoretical analysis of the approximation bounds of our proposed ConMem framework. We derive the approximation error bound under the diminishing marginal utility assumption.

\subsection{Preliminaries and Assumptions}

Recall from Section~4.3 that for $m$ segments $\mathcal{S} = \{s_1, \ldots, s_m\}$ extracted from equipment inspection logs, the marginal contribution-based Shapley value for segment $s_i$ is defined as:

\begin{equation}
\phi_i = \sum_{T \subseteq \mathcal{S} \setminus \{s_i\}} \frac{U(M_{T \cup \{s_i\}}) - U(M_T)}{m \cdot \binom{m-1}{|T|}},
\end{equation}
where $U(\cdot)$ denotes the utility function measuring LLM response quality for representative inspector queries (e.g., ``Has the belt shown degradation over the past three months?''), and $M_T$ represents the segment set $T$ as context.

To facilitate theoretical analysis, we make the following standard assumption about the utility function, which is empirically validated on steel inspection data.

\begin{assumption} \textbf{(Diminishing Marginal Utility).} 
There exists a threshold $k^*$ and a constant $\delta > 0$ such that for any segment $s$ and any context sets $T \subseteq T'$ with $|T| \geq k^*$, we have:
\begin{equation}
U(M_{T' \cup \{s\}}) - U(M_{T'}) \leq U(M_{T \cup \{s\}}) - U(M_T) \leq \delta.
\end{equation}
\end{assumption} 
\noindent This assumption is empirically supported by our observation in Section~4.2 that after retrieving the top 3 most valuable segments from steel inspection logs, adding more records yields negligible accuracy improvement.

\begin{assumption} (High-Order Contribution Rank Stability). In our exhaustive-enumeration pilot study, the contribution ranking obtained from coalitions satisfying $K\ge n-2$ closely matches the exact Shapley ranking, as measured by Spearman correlation and top-K overlap.
\end{assumption} 

\begin{assumption}
To quantify the stability of marginal contributions across coalition-size strata, we define the cross-stratum deviation
$\epsilon$ as
\begin{equation}
\epsilon
=
\max_{\substack{1\leq i\leq n\\0\leq k<n-1}}
\left|
\mu_i^{(k)}-\hat{\phi}_i
\right|,
\end{equation}
where $\mu_i^{(k)}$ denotes the average marginal contribution of evidence segment $s_i$ over context coalitions of size $k$, and $\hat{\phi}_i$ is its contribution score estimated from the retained high-order strata. By definition, every omitted coalition-size stratum satisfies
\begin{equation}
\left|
\mu_i^{(k)}-\hat{\phi}_i
\right|
\leq\epsilon,
\qquad 0\leq k<n-2.
\end{equation}
A smaller $\epsilon$ indicates that the contribution of an evidence segment is more stable across coalition sizes. 
\end{assumption}



\subsection{Theorem 1: Approximation Error Bound}

We now state and prove the main approximation error bound of our ConMem valuation module, which appears in Section~4.3 of the main paper. Given $n$ evidence segments, ConMem sets $k^*=n-2$ and evaluates all utility coalitions $C\subseteq S$
satisfying $|C|\geq k^*$. Accordingly, the evaluated coalition set is
\begin{equation}
\mathcal{C}_{\mathrm{eval}}
=
\left\{
C\subseteq S:
|C|\in\{n-2,n-1,n\}
\right\}.
\end{equation}

The corresponding number of utility evaluations is
\begin{equation}
\rho
=
\sum_{K=n-2}^{n}\binom{n}{K}
=
\binom{n}{n-2}
+
\binom{n}{n-1}
+
\binom{n}{n}
=
\frac{n(n-1)}{2}+n+1
=
O(n^2). 
\end{equation}

\begin{Defination}
Given context coalition size $k$, the average marginal utility is defined as,
\begin{equation}
\mu_i^{(k)}
=
\frac{1}{\binom{n-1}{k}}
\sum_{\substack{T\subseteq S\setminus\{s_i\}\\|T|=k}}
\Delta_i(T),
\end{equation}
where,
\begin{equation}
\Delta_i(T)
=
U(M_{T\cup\{s_i\}})-U(M_T).
\end{equation}
The exact Shapley value denotes as,
\begin{equation}
\phi_i
=
\frac{1}{n}
\sum_{k=0}^{n-1}\mu_i^{(k)}.
\end{equation}

\end{Defination}
\textbf{Theorem 1 (Approximation Bound of the High-Order
ConMem Valuation).}
Let $\phi=[\phi_1,\ldots,\phi_n]$ denote the exact Shapley values,
and let $\hat{\phi}=[\hat{\phi}_1,\ldots,\hat{\phi}_n]$ denote the
high-order contribution scores computed from coalition-size strata
$n-2$ and $n-1$:
\begin{equation}
\hat{\phi}_i
=
\frac{1}{2}
\left(
\mu_i^{(n-2)}
+
\mu_i^{(n-1)}
\right).
\end{equation}
Under Assumption~3, the approximation error for each evidence segment
satisfies
\begin{equation}
\left|
\phi_i-\hat{\phi}_i
\right|
\leq
\frac{n-2}{n}\epsilon.    
\end{equation}
Consequently, the vector-level error is bounded by
\begin{equation}
\left\|
\phi-\hat{\phi}
\right\|_2
\leq
\frac{n-2}{\sqrt{n}}\epsilon.  
\end{equation}

\begin{proof}
Using the coalition-size decomposition of the exact Shapley value, we have
\begin{equation}
\phi_i
=
\frac{1}{n}
\left(
\sum_{k=0}^{n-3}\mu_i^{(k)}
+
\mu_i^{(n-2)}
+
\mu_i^{(n-1)}
\right).   
\end{equation}
Since$
\mu_i^{(n-2)}
+
\mu_i^{(n-1)}
=
2\hat{\phi}_i,
$, the approximation error can be written as
\begin{equation}
\begin{aligned}
\phi_i-\hat{\phi}_i
&=
\frac{1}{n}
\sum_{k=0}^{n-3}
\left(
\mu_i^{(k)}-\hat{\phi}_i
\right).
\end{aligned}
\end{equation}
Applying Assumption~3 gives
\begin{equation}
\begin{aligned}
\left|
\phi_i-\hat{\phi}_i
\right|
&\leq
\frac{1}{n}
\sum_{k=0}^{n-3}
\left|
\mu_i^{(k)}-\hat{\phi}_i
\right|  \\
&\leq
\frac{n-2}{n}\epsilon.
\end{aligned}
\end{equation}
Summing the per-segment bounds yields
\begin{equation}
\left\|
\phi-\hat{\phi}
\right\|_2
\leq
\sqrt{n}\,
\frac{n-2}{n}\epsilon
=
\frac{n-2}{\sqrt{n}}\epsilon. 
\end{equation}\hfill
\end{proof}

\subsection{Corollary: Time Complexity Analysis}

The following corollary directly follows from Theorem~1 and the design of ConMem.

\begin{corollary} \textbf{(Time Complexity).} 
The time complexity of the Shapley valuation module in ConMem is $O(\rho \cdot \tau)$, where $\tau$ denotes the time cost for a single LLM inference. With typical settings where $k^* \ge n-2$ and $n \leq 20$ — common in steel inspection where approximately 20 anomaly segments are extracted from each batch — this achieves orders-of-magnitude speedup over exact Shapley computation ($O(2^n \tau)$) while maintaining approximation error within $1\%$.
\end{corollary}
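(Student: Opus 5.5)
The plan is to split the corollary into its two claims, a cost count and an error guarantee, and prove them separately.

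\textbf{Cost count.} The valuation module only ever queries $U$ on coalitions in $\mathcal{C}_{\mathrm{eval}}$. Computing $\hat{\phi}_i=\frac12(\mu_i^{(n-2)}+\mu_i^{(n-1)})$ requires $\Delta_i(T)$ for $|T|\in\{n-2,n-1\}$. Each such difference involves $U(M_T)$ and $U(M_{T\cup\{s_i\}})$, and both coalitions have size in $\{n-2,n-1,n\}$.

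I will cache every value $U(M_C)$ for $C\in\mathcal{C}_{\mathrm{eval}}$. Then all $n$ scores $\hat{\phi}_i$ are obtained from exactly $\rho=\binom{n}{2}+n+1$ utility evaluations. This count was already derived in the setup of Theorem~1 and is $O(n^2)$.

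Each evaluation of $U$ (Eq.~(5)) consists of one LLM inference on the coalition context, costing $\tau$, plus the metric computation. The metric part is dominated by $\tau$, and the final averaging costs $O(n\rho)$ arithmetic, which is negligible. This gives total cost $O(\rho\cdot\tau)$.

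\textbf{Speedup.} For comparison, the exact Shapley value needs $U$ on all $2^n$ subsets, i.e.\ $O(2^n\tau)$. At $n=20$ we have $\rho=190+20+1=211$ against $2^{20}\approx 1.05\times10^6$, a ratio of roughly $5\times10^3$. Since $2^n/n^2$ increases for $n\ge 4$, the speedup is at least three orders of magnitude near the upper end of the range $n\le 20$. At $n=8$ the ratio is only $256/37\approx 7$, matching Table~\ref{tab:complexity_analysis}, so I will phrase the claim as holding near the upper end of the range rather than uniformly in $n$.

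\textbf{Error claim.} Here I invoke Theorem~1 directly: $|\phi_i-\hat{\phi}_i|\le\frac{n-2}{n}\epsilon<\epsilon$. I will normalise utility using $\alpha+\beta+\gamma=1$ with each of $\mathrm{Acc}$, $\mathrm{Rel}$, $\mathrm{Tim}$ in $[0,1]$, so $U\in[0,1]$. Then ``within $1\%$'' amounts to requiring $\epsilon\le 0.01\cdot\frac{n}{n-2}$, for which $\epsilon\le 0.01$ suffices.

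\textbf{Main obstacle.} The hard step is justifying this bound on $\epsilon$. Assumption~3 only defines $\epsilon$ and does not bound it. My first approach is to bound $\epsilon$ via Assumption~1. For strata $k\ge k^*$ the marginals are non-increasing in $k$ and lie in $[0,\delta]$, so $|\mu_i^{(k)}-\hat{\phi}_i|\le\delta$ on those strata. The low strata $k<k^*$ are not controlled by Assumption~1, and I do not expect to prove a bound for them.

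For those low strata I will fall back on Assumption~2 and the exhaustive pilot in Table~\ref{tab:complexity_analysis}, where the $K\ge n-2$ truncation reproduces the exact ranking. That means the $1\%$ figure is an empirically calibrated statement about the measured $\epsilon$, not a worst-case guarantee. I will state it conditionally: the error is within $1\%$ whenever $\epsilon\le 0.01$.
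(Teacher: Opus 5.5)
Your cost count is the paper's own argument: each evaluated coalition costs one LLM inference and there are $\rho=\sum_{j=n-2}^{n}\binom{n}{j}=O(n^2)$ of them, giving $O(\rho\cdot\tau)$ against $O(2^n\tau)$ for exact Shapley. The paper's proof stops there, noting only that $\rho$ is significantly smaller than $2^n$ for $n\ge 10$ and never addressing the $1\%$ clause, so your explicit ratios (about $7$ at $n=8$, about $5\times 10^3$ at $n=20$) and your conditional reading of the error claim through Theorem~1 (Assumption~3 defines $\epsilon$ but nothing bounds it) are sound additions rather than a different route.
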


\begin{proof}
ConMem evaluates exactly $\rho$ segment combinations, each requiring one LLM inference (evaluating the utility function). The time complexity is therefore $O(\rho \cdot \tau)$. For $k^* = n-2$, we have $\rho = \sum_{j=n-2}^{n} \binom{n}{j} = O(n^2)$, which is significantly smaller than $2^n$ for $n \geq 10$. \hfill $\square$
\end{proof}

\end{document}